\newcommand{\inner}[2]{\left\langle #1, #2 \right\rangle}
\newcommand{\g}[1]{\boldsymbol{#1}}
\newcommand{\R}[0]{\mathbb{R}} 
\newcommand{\E}[0]{\mathbb{E}}
\renewcommand{\H}[0]{\mathcal{H}} 
\newcommand{\X}[0]{\mathcal{X}} 
\newcommand{\Y}[0]{\mathcal{Y}} 
\newcommand{\V}[0]{\mathcal{V}} 
\newcommand{\F}[0]{\mathcal{F}} 
\newcommand{\B}[0]{\mathcal{B}} 
\renewcommand{\L}[0]{\mathcal{L}} 
\newcommand{\Z}[0]{\mathcal{Z}}
\newcommand{\trace}{\operatorname{Tr}}
\newcommand{\rank}{\operatorname{rank}}
\newtheorem{theorem}{Theorem}
\newtheorem{lemma}{Lemma}
\newtheorem{definition}{Definition}
\begin{document}

\title{Risk Bounds for Learning Multiple Components with Permutation-Invariant Losses}

\author{Fabien Lauer\\Universit\'e de Lorraine, CNRS, LORIA, F-54000 Nancy, France}
\date{} 

\maketitle

\begin{abstract}
This paper proposes a simple approach to derive efficient error bounds for learning multiple components with sparsity-inducing regularization. We show that for such regularization schemes, known decompositions of the Rademacher complexity over the components can be used in a more efficient manner to result in tighter bounds without too much effort. We give examples of application to switching regression and center-based clustering/vector quantization. Then, the complete workflow is illustrated on the problem of subspace clustering, for which decomposition results were not previously available. For all these problems, the proposed approach yields risk bounds with mild dependencies on the number of components and completely removes this dependence for nonconvex regularization schemes that could not be handled by previous methods. 
\end{abstract}

\section{INTRODUCTION}

This paper focuses on learning problems involving multiple components. A good example is vector quantization (or center-based clustering), in which one is interested in estimating a model (or codebook) made of a finite number of components (or codepoints) that can well approximate the observations of a random variable. Other examples include subspace clustering, where the data points are approximated by a collection of subspaces rather than codepoints, and switching regression, that works similarly but with random input--output pairs and components that are functions approximating the output given the input. 
In this paper, we propose a unified approach to derive generalization error bounds for all these problems which yields bounds with a mild dependence on the number of components for classes of interdependent components.  
While generalization might not be the primary goal in these problems, such error bounds can lead to model selection strategies and have been the subject of many studies, see, e.g., \cite{Bartlett98b,Biau08,Lauer18} and references therein.

More precisely, we show how to efficiently take into account the invariance of the loss with respect to permutations of the components to derive risk bounds in multiple component learning problems. The proposed approach is simple and applies to different problems merely by plugging known decomposition results for these problems. For products of independent component classes, a {\em decomposition result} is one that decomposes the Rademacher complexity of the loss class into a sum of Rademacher complexities over the component classes. Previous works used such decompositions to obtain risk bounds that grow linearly with the number $C$ of components.  
But for classes constrained in terms of a sparsity-inducing complexity measure, such as an $\ell_p$-norm over the complexities of the components, our approach yields risk bounds with a dependence on $C$ that varies for instance between $O(\sqrt{C})$ for $p=2$ and $O(\log C)$ for $p=1$. Such sparsity-inducing regularization schemes were already considered by \cite{Lei15,Maurer16}, where similar dependencies on the number of categories were obtained for multi-class classification. However, the method of \cite{Lei15,Maurer16} relies on more complex arguments involving structural results on Rademacher and Gaussian complexities, duality, strong convexity and other tools developed by \cite{Kakade12}. Here, we develop the approach in Sect.~\ref{sec:general} in a few lines with simple arguments and without invoking other tools. 
In addition, the proposed method also allows for the use of nonconvex regularization by $\ell_p$-quasi-norms with $p\in(0,1)$, which favors even sparser models. While the analysis of \cite{Lei15,Maurer16} was limited to $p\geq 1$ and a logarithmic dependence on $C$, our approach completely removes the dependence on $C$ for nonconvex regularization with $p<1$.
  
In Sect.~\ref{sec:switched}, we apply our approach to switching regression, i.e., the problem of learning a collection of regression models from a mixed data set. For sparsity-inducing regularization schemes, this allows us to tighten the bounds of \cite{Lauer18} from a linear dependence on $C$ to the ones discussed above for the different values of~$p$. Similar results are obtained in Sect.~\ref{sec:clustering} for vector quantization/clustering in Hilbert space, for which the literature only provides error bounds with either a radical dependence on $C$ in the finite-dimensional case \citep{Bartlett98b} or a linear one for infinite-dimensional Hilbert spaces \citep{Biau08}. 
Finally, Section~\ref{sec:subspace} is dedicated to the subspace clustering problem, which has a lot of applications in computer vision, for instance for motion segmentation or face clustering \citep{Vidal11,Vidal16book}, but has not yet received much attention from the viewpoint of learning theory and risk bounds. This offers us the opportunity to illustrate the complete workflow for the application of the proposed approach.

Technically, our bounds are based on the analysis of the Rademacher complexity of the loss class to derive uniform risk bounds. More advanced tools, such as those of \cite{Bartlett05} or \cite{Mendelson14}, could be used to derive bounds with faster convergence rates or even for unbounded variables. However, these tools are particularly efficient to bound the risk of the empirical risk minimizer, which, for all the multiple component learning problems we consider, cannot be easily computed (and there is no satisfactory convex surrogate loss whose minimizer could be analyzed instead). Therefore, we must focus on {\em uniform} error bounds in order to apply them to the models returned by practical algorithms.

{\em Notation.} We use $[C]=\{1,\dots,C\}$ to refer to the set of integers from 1 to $C$. Matrices are written in bold and uppercase letters, while vectors are in non-bold lowercase letters. Random variables are written in uppercase letters. Thus, $X$ will refer to a random vector, while $\g X$ is a matrix. The identity matrix is denoted by $\g I$. The Frobenius norm $\|\g A\|_F$ of a matrix $\g A\in\R^{m\times n}$ of entries $A_{ij}$ is defined as $\|\g A\|_F=\sqrt{\sum_{i=1}^m\sum_{j=1}^n A_{ij}^2}$. $\trace(\g A)$ denotes the trace of the matrix $\g A$ and we have $\|\g A\|_F = \sqrt{\trace(\g A^\top\g A)}$. For a vector $a\in\R^C$ and any $p\in(0,\infty)$, $\|a\|_p=\left(\sum_{k=1}^C |a_k|^p\right)^{1/p}$ denotes its $\ell_p$-norm for $p\geq 1$ or $\ell_p$-quasi-norm for $p\in(0,1)$, while $\|a\|_{\infty}=\max_{k\in[C]} |a_k|$ is its $\ell_{\infty}$-norm. Given two sets, $\X$ and $\Y$, $\Y^{\X}$ stands for the set of functions from $\X$ into $\Y$.

\section{GENERAL APPROACH}
\label{sec:general}

We focus on learning problems in which the aim is to learn $C\geq 2$ components from a set $\V$ on the basis of data points $z_i \in \Z$, $i=1,\dots n$. In the following, $\Z$ will be instantiated either as $\X\times \Y$ for problems with input space $\X$ and output space $\Y$ or just as $\X$ in contexts without outputs. 

Specifically, 
let $Z$ be a random variable taking values in $\Z$. 
A particular problem is characterized by a loss functional $\ell : \V^C\times \Z$, which measures the pointwise error of a model $f=(f_k)_{1\leq k\leq C}$ made of $C$ components $f_k$ from $\V$. Then, the aim is to minimize, over a predefined model class $\F\subset\V^C$, the risk 
\begin{equation}\label{eq:risk}
	L(f) = \E \ell(f, Z)  
\end{equation}
on the basis of a sample of $n$ independent copies $Z_i$ of $Z$. In particular, we concentrate on the standard strategy that minimizes the empirical risk 
\begin{equation}\label{eq:empiricalrisk}
	\hat{L}_n(f)= \frac{1}{n}\sum_{i=1}^n \ell(f, Z_i) . 
\end{equation}
However, we here focus on statistical aspects of learning and will not discuss algorithmic issues related to the actual minimization of this quantity, which can be highly nontrivial \citep{Aloise09,Lauer15b}. Instead, we will particularly pay  attention to the derivation of upper bounds on the risk that hold {\em uniformly} over the class $\F$, and thus not only for the empirical risk minimizer which remains elusive in many practical cases. 

Before we expose our approach to the derivation of such bounds, we first give a few definitions and start with the one that characterizes the losses considered in this paper.

\begin{definition}[Permutation-invariant loss]\label{def:permutationinvariant}
A {\em permutation-invariant loss} over $C$ components from a set $\mathcal{V}$ is a loss functional $\ell : \mathcal{V}^C\times\Z$ such that, for any permutation $(l(k))_{1\leq k\leq C}$ of $[C]$, any $f=(f_k)_{1\leq k\leq C}\in\V^C$ and any $z\in\Z$, 
$$
	\ell( f, z) = \ell( (f_{l(k)})_{1\leq k\leq C}, z).
$$ 
\end{definition}

\begin{definition}[Loss class]\label{def:lossclass}
Given a bounded loss $\ell:\V^C\times\Z \to [0,M]$ and a class $\F\subset\V^C$, the loss class induced by $\F$ is
$$
	\L_{\F} = \left\{ \ell_f \in[0,M]^{\Z} : \ell_f(z) = \ell(f,z),\ f\in\F\right\}. 
$$
\end{definition}

\begin{definition}[Rademacher complexities]
\label{def:rad}
Let $T$ be a random variable with values in $\mathcal{T}$. 
For $n \in \mathbb{N}^*$,
let $\g{T}_n = \left( T_i  \right)_{1 \leq i \leq n}$
be an $n$-sample of independent copies of $T$, let
$\boldsymbol{\sigma}_n = \left ( \sigma_i \right )_{1 \leq i \leq n}$
be a sequence of independent random variables uniformly distributed in $\{-1,+1\}$. 
Let $\mathcal{F}$ be a class of real-valued functions with domain $\mathcal{T}$.
The {\em empirical Rademacher complexity} of $\mathcal{F}$ given $\g T_n=\g t_n=\left( t_i  \right)_{1 \leq i \leq n}$ is
$$
\hat{\mathcal{R}}_n \left ( \mathcal{F} \right ) 
= \mathbb{E} 
\sup_{f \in \mathcal{F}} \frac{1}{n}
\sum_{i=1}^n \sigma_i f \left ( t_i \right ),
$$
and its {\em Rademacher complexity}, $\mathcal{R}_n \left ( \mathcal{F} \right ) =\E\hat{\mathcal{R}}_n \left ( \mathcal{F} \right ) $, is obtained by taking the expectation wrt. $\g T_n$. 
\end{definition}

The regularization schemes for learning multiple components that we consider are based on two levels of complexity measures. On the first level, let $\omega : \V\to [0,+\infty)$ be a complexity measure for a single component from $\V$ and, for any model $f\in\V^C$, let $\Omega(f) = \left(\omega(f_k)\right)_{1\leq k\leq C}$ denote the vector of $\R^C$ obtained by a component-wise application of $\omega$ to $f$. Then, at a second level, we measure the complexity of the overall model $f$ by the $\ell_p$-(quasi-)norm of $\Omega(f)$. 
Therefore, in this paper we will focus on the derivation of error bounds for classes
\begin{equation}\label{eq:FOmega}
	\F = \left\{f\in\V^C  :  \|\Omega(f)\|_p \leq \Lambda \right\}. 
\end{equation}

\begin{definition}[Ordered class $\tilde{\F}$]\label{def:orderedclass}
Given a complexity measure $\omega$ as defined above, we denote by $\tilde{f}$ an {\em ordered version} of $f\in\V^C$ with its components ordered in decreasing order of their complexity: 
$$
	\forall f=(f_k)_{1\leq k\leq C}\in\V^C,\qquad \tilde{f} = (f_{l(k)})_{1\leq k\leq C} ,
$$ 
where $l(k)$ is the $k$th element of a permutation of $[C]$ that ensures 
$$
	\omega(\tilde{f}_1) \geq \dots\geq \omega(\tilde{f}_C).
$$
Then, for any class $\F \subset\V^C$, the {\em ordered class} $\tilde{\F}$ is defined by reordering the elements of $\F$:  
$$
	\tilde{\F} = \left\{ \tilde{f} : f\in\F\right\} .
$$
\end{definition}

Note that for classes built as $\F=\F_0^C=\F_0\times\dots\times\F_0$ for some $\F_0\subset\V$, the ordered class $\tilde{\F}$ is a subset of $\F$: $\forall f\in\F_0^C,\quad \tilde{f}\in\F_0^C$. This is also true for classes $\F$ as in~\eqref{eq:FOmega},
which introduce a dependence between components, encoded in the choice of $\ell_p$-(quasi-)norm. For instance, if we let $p=\infty$, then $\F$ in~\eqref{eq:FOmega} can be written as a product of independent component classes: 
\begin{align}\label{eq:Finf}
	\F_{\infty} &= \left\{f\in\V^C  :  \max_{k\in[C]}\omega(f_k) \leq \Lambda \right\} \\
	&= \prod_{k=1}^C \left\{ f_k\in\V : \omega(f_k)\leq \Lambda \right\} . \nonumber
\end{align}
But if we consider $p\in(0,\infty)$, then $\F$ in~\eqref{eq:FOmega} cannot be written as a mere product, since the complexity $\omega(f_k)$ influences the range of values allowed for $\omega(f_j)$, $j\neq k$. 
For such classes, the ordered class is a strict subset of $\F$: 
$\tilde{\F} \subset \F$, and $\tilde{\F} \neq \F$. 
The inclusion results from the permutation-invariance of the $\ell_p$-norm: $\|\Omega(\tilde{f})\|_p = \|\Omega(f)\|_p \leq \Lambda$; and this also implies that there are some $f\in\F$ with $\omega(f_2)>\omega(f_1)$ and thus that do not belong to $\tilde{\F}$. 

The interest of the ordered class $\tilde{\F}$ and the fact that it is a subset of $\F$ is highlighted by the following, which shows that for permutation-invariant losses, we can restrict the analysis to this subset of $\F$.
\begin{lemma}\label{lem:general}
Given a bounded permutation-invariant loss $\ell:\V^C\times \Z\to [0,M]$ and a class $\F\subset\V^C$, the risk of any $f\in\F$ can be bounded in terms of the Rademacher complexity of the loss class induced by the ordered class $\tilde{\F}$ instead of $\F$, namely, each of the following holds with probability at least $1-\delta$:
$$
	\forall f\in\F,\quad L(f) \leq \hat{L}_n(f) + 2\mathcal{R}_n(\L_{\tilde{\F}}) + M\sqrt{\frac{\log \frac{1}{\delta}}{2n}},
$$
$$
	\forall f\in\F,\quad L(f) \leq \hat{L}_n(f) + 2\hat{\mathcal{R}}_n(\L_{\tilde{\F}}) + 3M\sqrt{\frac{\log \frac{2}{\delta}}{2n}}.
$$
\end{lemma}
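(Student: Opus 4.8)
The plan is to reduce the statement to a standard uniform deviation bound applied to the loss class $\L_{\tilde{\F}}$ of the \emph{ordered} class, and then transfer the conclusion back to $\F$ by exploiting permutation-invariance. First I would observe the key pointwise identity: for any $f\in\F$, if $\tilde f$ denotes its ordered version as in Definition~\ref{def:orderedclass}, then $\tilde f$ is obtained from $f$ by a permutation of its components, so by Definition~\ref{def:permutationinvariant} we have $\ell(f,z)=\ell(\tilde f,z)$ for every $z\in\Z$. Consequently $\ell_f=\ell_{\tilde f}$ as elements of $[0,M]^{\Z}$, which gives both $L(f)=L(\tilde f)$ and $\hat L_n(f)=\hat L_n(\tilde f)$ for every realization of the sample. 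Moreover $\tilde f\in\tilde\F$ by construction, so $\ell_{\tilde f}\in\L_{\tilde\F}$.

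Next I would invoke the classical Rademacher-based uniform convergence bound for a class of functions taking values in $[0,M]$ (the McDiarmid/symmetrization argument behind, e.g., the Bartlett--Mendelson bounds): with probability at least $1-\delta$, every $g\in\L_{\tilde\F}$ satisfies $\E g \le \frac1n\sum_{i=1}^n g(Z_i) + 2\mathcal{R}_n(\L_{\tilde\F}) + M\sqrt{\log(1/\delta)/(2n)}$, and the analogous statement with the empirical Rademacher complexity $\hat{\mathcal{R}}_n(\L_{\tilde\F})$ and the extra concentration term $3M\sqrt{\log(2/\delta)/(2n)}$ (the factor-3 version coming from an additional application of McDiarmid to replace $\mathcal{R}_n$ by $\hat{\mathcal{R}}_n$, with a union bound over the two bad events). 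I would state this as a known result rather than reproving it, since it is the textbook ingredient.

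Then the conclusion is immediate: fix $f\in\F$ and apply the above to $g=\ell_{\tilde f}\in\L_{\tilde\F}$. Using $\E\ell_{\tilde f}=L(\tilde f)=L(f)$ and $\frac1n\sum_i\ell_{\tilde f}(Z_i)=\hat L_n(\tilde f)=\hat L_n(f)$, the displayed inequalities follow, and since the high-probability event is the one coming from the uniform bound over all of $\L_{\tilde\F}$, the resulting bound holds simultaneously for all $f\in\F$. This yields both claimed inequalities.

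The only real subtlety — and the step I would be most careful about — is the measurability/well-definedness of the map $f\mapsto\tilde f$: the ordering permutation $l$ may not be unique when several components share the same complexity $\omega(f_k)$. This is harmless here because any choice of $l$ gives the same loss value (permutation-invariance) and lands in $\tilde\F$, so the argument goes through for any fixed tie-breaking rule; I would simply remark on this. A secondary point worth a line is that $\tilde\F\subseteq\F$ is \emph{not} needed for the argument itself — what is used is only that $\tilde f\in\tilde\F$ and that the loss is unchanged — but the inclusion is what makes the result an actual improvement, since $\mathcal{R}_n(\L_{\tilde\F})\le\mathcal{R}_n(\L_{\F})$ for free. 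Beyond that the proof is routine bookkeeping on top of the standard bound.
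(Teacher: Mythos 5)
Your proof is correct and follows exactly the paper's argument: use permutation-invariance to get $L(f)=L(\tilde f)$ and $\hat L_n(f)=\hat L_n(\tilde f)$, then apply the standard Rademacher uniform bound (Theorem~3.1 of \cite{Mohri12}) over $\L_{\tilde\F}$ and transfer back since $\tilde f\in\tilde\F$. Your extra remarks on tie-breaking in the ordering and on the inclusion $\tilde\F\subseteq\F$ not being needed for validity are accurate but inessential.
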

\begin{proof}
By Definitions~\ref{def:permutationinvariant} and~\ref{def:orderedclass}, $L(f) = L(\tilde{f})$ and $\hat{L}_n(f)=\hat{L}_n(\tilde{f})$. Therefore, the lemma is just a direct consequence of standard error bounds, e.g., Theorem 3.1 in \cite{Mohri12}, holding uniformly over the ordered class $\tilde{\F}$ instead of $\F$. 
\end{proof}

For classes $\F$ as in~\eqref{eq:FOmega} with dependent components, a second interest lies in the fact that $\tilde{\F}$ can be easily embedded in a product of independent component classes with decreasing complexity:
\begin{lemma}\label{lem:embedding}
Let $\F$ be as in~\eqref{eq:FOmega} with $p\in(0,\infty]$. Then,
$$
	\tilde{\F} \subseteq \Pi_p = \prod_{k=1}^C \left\{ f_k\in \V : \omega(f_{k}) \leq k^{-\frac{1}{p}} \Lambda \right\}.
$$	
\end{lemma}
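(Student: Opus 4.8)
The plan is to fix any $\tilde f \in \tilde\F$ and show that its $k$th component satisfies $\omega(\tilde f_k) \leq k^{-1/p}\Lambda$ for every $k\in[C]$, which is exactly the membership condition for $\Pi_p$. By Definition~\ref{def:orderedclass}, the complexities are sorted in increasing order, $\omega(\tilde f_1)\leq\dots\leq\omega(\tilde f_C)$, and by the permutation-invariance of the $\ell_p$-(quasi-)norm together with $f\in\F$ we have $\|\Omega(\tilde f)\|_p = \|\Omega(f)\|_p \leq \Lambda$, i.e. $\sum_{j=1}^C \omega(\tilde f_j)^p \leq \Lambda^p$ when $p<\infty$ (and $\max_j \omega(\tilde f_j)\leq\Lambda$ when $p=\infty$).

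The key step is the following elementary observation: for a fixed index $k$, monotonicity gives $\omega(\tilde f_j) \geq \omega(\tilde f_k)$ for all $j\geq k$, so
\begin{equation*}
	\Lambda^p \;\geq\; \sum_{j=1}^C \omega(\tilde f_j)^p \;\geq\; \sum_{j=k}^C \omega(\tilde f_j)^p \;\geq\; (C-k+1)\,\omega(\tilde f_k)^p \;\geq\; \omega(\tilde f_k)^p,
\end{equation*}
but this only yields $\omega(\tilde f_k)\leq\Lambda$, which is too weak. Instead I keep just the first $k$ terms and use that they are the \emph{smallest} $k$ complexities: $\omega(\tilde f_j)\geq\omega(\tilde f_1),\dots$ is the wrong direction, so rather I use $\omega(\tilde f_j)\geq\omega(\tilde f_k)$ is also wrong for $j<k$. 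The correct bound comes from summing over $j\leq k$ where each term is \emph{at most} $\omega(\tilde f_k)$ — that is an upper bound, not a lower bound. So the right move is: since $\omega(\tilde f_1)\leq\dots\leq\omega(\tilde f_k)$, we have $k\,\omega(\tilde f_1)^p \leq \sum_{j=1}^k \omega(\tilde f_j)^p$, which bounds $\omega(\tilde f_1)$, not $\omega(\tilde f_k)$. The genuinely correct argument: for the $k$th smallest value $\omega(\tilde f_k)$, all of $\omega(\tilde f_k),\dots,\omega(\tilde f_C)$ are $\geq \omega(\tilde f_k)$, giving $\Lambda^p \geq \sum_{j=k}^C \omega(\tilde f_j)^p \geq (C-k+1)\omega(\tilde f_k)^p$. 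This is still not $k^{-1/p}\Lambda$. The resolution — and the real content — is that one should reverse the counting: $\omega(\tilde f_k)$ is the $(C-k+1)$th \emph{largest}, so at least $k$ of the values (namely $\omega(\tilde f_1),\dots,\omega(\tilde f_k)$) are... no. I will therefore instead apply the standard sorted-vector bound: if $a_1\leq\dots\leq a_C$ and $\sum a_j^p\leq\Lambda^p$, then for each $k$, since $a_k\leq a_{k+1}\leq\dots\leq a_C$ we get $a_k^p(C-k+1)\leq\Lambda^p$; re-indexing by $m=C-k+1$ this says the $m$th largest entry is at most $m^{-1/p}\Lambda$. Matching this to the paper's increasing convention requires that the bound $k^{-1/p}\Lambda$ in $\Pi_p$ also be read with $k$ counting from the \emph{largest} end; granting that (or equivalently, noting $\Pi_p$ is invariant under relabeling since it is itself a product over all $k\in[C]$ of the same family of sets up to the constraint value, and $\tilde\F$ can be embedded after a further reversal), the inclusion follows.

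Concretely, the clean statement to prove is: \emph{if $a\in\R^C_{\geq 0}$ with $\|a\|_p\leq\Lambda$, and $a_{(1)}\geq a_{(2)}\geq\dots\geq a_{(C)}$ is its decreasing rearrangement, then $a_{(k)}\leq k^{-1/p}\Lambda$ for all $k$}; this is immediate from $k\,a_{(k)}^p \leq \sum_{j=1}^k a_{(j)}^p \leq \|a\|_p^p \leq \Lambda^p$. Apply this with $a=\Omega(f)$: the decreasing rearrangement of $\Omega(f)$ is $\Omega$ of the ordered model with components sorted in \emph{decreasing} complexity, so $\tilde f$ (sorted increasingly) has $\omega(\tilde f_k) = a_{(C-k+1)} \leq (C-k+1)^{-1/p}\Lambda \leq k^{-1/p}\Lambda$ fails for $k$ large; hence the product $\Pi_p$ must itself be understood with the constraint $\omega(f_k)\leq k^{-1/p}\Lambda$ indexed so that smaller $k$ is a tighter constraint matching larger complexities — i.e. one orders $\tilde\F$'s components \emph{decreasingly} to land in $\Pi_p$ as written. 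The $p=\infty$ case is trivial: $\max_k\omega(f_k)\leq\Lambda$ gives $\omega(\tilde f_k)\leq\Lambda = k^{0}\Lambda$. I expect the main obstacle to be purely bookkeeping: reconciling the increasing-order convention of Definition~\ref{def:orderedclass} with the decreasing indexation $k^{-1/p}$ appearing in $\Pi_p$, so that the rearrangement inequality $k\,a_{(k)}^p\leq\|a\|_p^p$ is applied to the correct end of the sorted vector; once the indices are aligned, the argument is a one-line consequence of discarding the larger-index terms in the sum defining $\|\Omega(f)\|_p^p$.
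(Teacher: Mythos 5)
Your final argument is correct and is essentially the paper's own proof: for the decreasing rearrangement one discards the tail of the sum and uses $k\,\omega(\tilde f_k)^p \le \sum_{l=1}^k \omega(\tilde f_l)^p \le \|\Omega(f)\|_p^p \le \Lambda^p$. The ordering conflict you spent most of the proposal wrestling with is a typo in Definition~\ref{def:orderedclass}, not a gap in your reasoning: the text says ``decreasing order of their complexity'' but the displayed chain reads $\omega(\tilde f_1)\le\dots\le\omega(\tilde f_C)$; the intended (and the paper's proof's) convention is $\omega(\tilde f_1)\ge\dots\ge\omega(\tilde f_C)$, which is exactly the reading you settled on, so the lemma holds as stated with $\Pi_p$ indexed as written.
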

\begin{proof}
Assume $p<\infty$ (see \eqref{eq:Finf} for the case $p=\infty$). Then, with $\F$ as in~\eqref{eq:FOmega}, the permutation-invariance of the $\ell_p$-norm implies that, for all $f\in\F$, 
$$
	\|\Omega(\tilde{f})\|_p^p = \|\Omega(f)\|_p^p \leq \Lambda^p ,
$$ 
while, for any $k\in[C]$, 
$$
	\|\Omega(\tilde{f})\|_p^p = \sum_{l=1}^C \omega(\tilde{f}_l)^p  \geq \sum_{l=1}^k \omega(\tilde{f}_l)^p \geq k \omega(\tilde{f}_k)^p,
$$ 
where the last inequality is due to the ordering of the $\tilde{f}_k$'s in Def.~\ref{def:orderedclass}.
Therefore, for all $\tilde{f}\in\tilde{\F}$ and all $k\in[C]$, 
$$
	 \omega(\tilde{f}_k)^p \leq \frac{\Lambda^p}{k},
$$
which proves the claimed set inclusion.  
\end{proof}
Note that for $p=\infty$ the product class $\Pi_p$ in Lemma~\ref{lem:embedding} is exactly $\F$ due to~\eqref{eq:Finf}, whereas for all finite $p$, $\Pi_p$ is strictly larger than $\F$ and thus $\tilde{\F}$: there exist $f$ in the product of component classes with $\sum_{k=1}^C \omega(f_k)^p > \Lambda^p$ that are thus not in $\F$ and not in $\tilde{\F}\subset\F$. Therefore, the inclusion provided by Lemma~\ref{lem:embedding} is not tight, but its interest lies at another level, namely, the fact that decomposition results available for products of independent classes can help us to bound the Rademacher complexity of $\L_{\tilde{\F}}$. 

Instead of deriving a generic framework with cumbersome notations that would encompass many different settings but would also hide the simplicity of the approach, the following illustrates the application of the method on a few examples. In particular, we detail below the settings of switching regression and center-based clustering, for which decomposition results can be found in the literature. 
Then, we will show in Sect.~\ref{sec:subspace} how to develop the complete workflow for subspace clustering from the definition of the loss function to the derivation of efficient bounds, including the obtention of a decomposition result.  

For all these settings we shall derive error bounds with a dependence on $C$ characterized by $p$ via the function 
\begin{equation}\label{eq:alpha}
	\alpha(C,p) =\begin{cases} 
				C, & \mbox{if } p=\infty\\
				\frac{p}{p-1} C^{1-1/p} ,& \mbox{if } 1<p<\infty \\
				1+\log C ,  & \mbox{if } p=1\\
				\frac{1}{1-p} ,  
				& \mbox{if } 0<p<1.
			\end{cases}
\end{equation}
In particular,  the dependence on $C$ will be linear for $p=\infty$ (the case of independent component classes), radical for $p=2$ (the most common case), logarithmic for $p=1$ (a common choice for sparsity-inducing regularization) and bounded by a constant for $p<1$ (corresponding to nonconvex regularizers). 

\section{SWITCHING REGRESSION}
\label{sec:switched}

In a regression problem, one must learn a model that can accurately predict the real output $Y\in\Y\subset\R$ given the input $X\in\X$. Switching regression refers to the specific case where the process generating $Y$ can arbitrarily switch between different behaviors. The difficulty then comes from the fact that the switchings are not observed and the association of the data points $(x_i,y_i)\in\Z=\X\times \Y$ to these behaviors is unknown. Thus, the aim is to learn a collection of functions $f_k : \X\to \R$ from a mixed training sample including examples from multiple sources. An important application is that of switched system identification in control theory, see \cite{Paoletti07,LauerBook} for an overview. 
  
In such a context, the goal is to find $f\in(\R^{\X})^C$ so that at least one of its components can accurately estimate the output $Y$ given $X$. The loss can thus be defined on the basis of
$$
	\min_{k\in[C]} (y - f_k(x))^2.
$$
More precisely, we assume that $\Y$ is bounded and, without loss of generality, that $\Y=[-1/2,1/2]$. Thus, we can clip the outputs of the components at $1/2$ without increasing the error and compute the loss  with respect to the clipped functions as in \cite{Lauer18}:
\begin{equation}\label{eq:lossswitched}
	\ell(f,x,y) = \min_{k\in[C]}\left(y -  \min\left\{\frac{1}{2}, \max\left\{\frac{-1}{2} , f_k(x)\right\}\right\} \right)^2 .
\end{equation}
This ensures that the loss is bounded by $1$ for all $y\in\Y$. In addition, it is easy to see that this loss remains permutation-invariant in the sense of Definition~\ref{def:permutationinvariant}.

Here, we focus on kernel machines and consider models with components from a reproducing kernel Hilbert space (RKHS) $\H\subset \R^{\X}$ of reproducing kernel $K$ (see \cite{Berlinet04} for details). Thus, we set $\V=\H$ and the complexity measure $\omega$ to the RKHS norm $\|\cdot\|$ in the approach described above, which yields the risk bound in Theorem~\ref{thm:switched} below for classes regularized by $\|\Omega(f)\|_p = (\sum_{k=1}^C \|f_k\|^p)^{1/p}$. 

\begin{theorem}\label{thm:switched}
Let $\F= \left\{f\in\H^C : \left\|\begin{bmatrix}\|f_1\|& \dots & \|f_C\|\end{bmatrix}\right\|_p \leq \Lambda\right\}$ 
and $\alpha(C,p)$ be as in~\eqref{eq:alpha}. Then, with probability at least $1-\delta$ on the random draw of the training sample $(Z_i)_{1\leq i\leq n}=\left((X_i,Y_i)\right)_{1\leq i\leq n}$, the switching regression risk based on the loss~\eqref{eq:lossswitched} is uniformly bounded for all $f\in\F$ by
$$
	 L(f)\!\leq\!\hat{L}_n(f) + 4\alpha(C,p)\frac{\Lambda\!\sqrt{\sum_{i=1}^n\! K(X_i,X_i)}}{n} + 3\sqrt{\frac{\log\frac{2}{\delta}}{2n}} .
$$
\end{theorem}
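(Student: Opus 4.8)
The plan is to combine the two structural results of Section~\ref{sec:general} with a known decomposition of the switching loss over its components and the classical bound on the Rademacher complexity of a ball in an RKHS. Because the clipping in~\eqref{eq:lossswitched} ensures $\ell(f,x,y)\in[0,1]$ for every $f\in\H^C$ and every $(x,y)$ with $y\in\Y$, we work with $M=1$. Since the loss is permutation-invariant, Lemma~\ref{lem:general} (empirical version) reduces the proof to bounding $\hat{\mathcal{R}}_n(\L_{\tilde{\F}})$ and already accounts for the residual term $3\sqrt{\log(2/\delta)/(2n)}$. Applying Lemma~\ref{lem:embedding} with $\omega=\|\cdot\|$ and $\V=\H$ gives $\tilde{\F}\subseteq\Pi_p=\prod_{k=1}^C\{f_k\in\H:\|f_k\|\le k^{-1/p}\Lambda\}$, hence $\L_{\tilde{\F}}\subseteq\L_{\Pi_p}$ and, by monotonicity of the empirical Rademacher complexity under class inclusion, $\hat{\mathcal{R}}_n(\L_{\tilde{\F}})\le\hat{\mathcal{R}}_n(\L_{\Pi_p})$.

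It then remains to bound $\hat{\mathcal{R}}_n(\L_{\Pi_p})$ for a product of RKHS balls with the decreasing radii $r_k=k^{-1/p}\Lambda$. Writing $\ell(f,x,y)=\min_{k\in[C]}\psi_{x,y}(f_k(x))$ with $\psi_{x,y}(u)=(y-\min\{1/2,\max\{-1/2,u\}\})^2$, I would invoke the decomposition established in the analysis of~\cite{Lauer18}: the empirical Rademacher complexity of such a minimum over a \emph{product} of component classes is bounded by the sum, over $k$, of the empirical Rademacher complexities of the individual component loss classes. For $y\in[-1/2,1/2]$ the map $\psi_{x,y}$ is the composition of the $1$-Lipschitz clipping with the square on $[-1/2,1/2]$, hence $2$-Lipschitz; the contraction principle therefore yields
$$
\hat{\mathcal{R}}_n(\L_{\Pi_p})\le\sum_{k=1}^C 2\,\hat{\mathcal{R}}_n\!\left(\left\{x\mapsto f_k(x):\|f_k\|\le k^{-1/p}\Lambda\right\}\right).
$$
The standard reproducing-property computation bounds the $k$th RKHS-ball complexity by $k^{-1/p}\Lambda\sqrt{\sum_{i=1}^n K(x_i,x_i)}/n$, so that $\hat{\mathcal{R}}_n(\L_{\Pi_p})\le\frac{2\Lambda\sqrt{\sum_{i=1}^n K(x_i,x_i)}}{n}\sum_{k=1}^C k^{-1/p}$.

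Finally I would verify $\sum_{k=1}^C k^{-1/p}\le\alpha(C,p)$: trivially for $p=\infty$; via $\sum_{k=1}^C k^{-1/p}\le 1+\int_1^C x^{-1/p}\,dx$ for $1\le p<\infty$, which gives exactly $1+\log C$ at $p=1$ and is at most $\frac{p}{p-1}C^{1-1/p}$ for $p>1$; and, for $0<p<1$, the exponent $1/p>1$ makes the series summable with $\sum_{k\ge 1}k^{-1/p}\le 1+\int_1^\infty x^{-1/p}\,dx=\frac{1}{1-p}$. Substituting $\hat{\mathcal{R}}_n(\L_{\tilde{\F}})\le 2\alpha(C,p)\Lambda\sqrt{\sum_{i=1}^n K(x_i,x_i)}/n$ into Lemma~\ref{lem:general} produces the factor $4\alpha(C,p)$ and completes the proof. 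I expect the only delicate point to be the decomposition step: one must confirm that the min-loss decomposition of~\cite{Lauer18} carries over unchanged to a product of \emph{heterogeneous} balls (it does, since its proof only uses the product structure, not equality of the factors) and that the Lipschitz constant of the clipped square is exactly $2$, so that no spurious constant enters the final bound.
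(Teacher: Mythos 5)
Your proposal is correct and follows essentially the same route as the paper: Lemma~\ref{lem:general} via permutation-invariance, Lemma~\ref{lem:embedding} to embed $\tilde{\F}$ into $\Pi_p$, the decomposition of \cite{Lauer18} for the product class (yielding the factor $2$), the standard RKHS-ball complexity bound, and the integral comparison $\sum_{k=1}^C k^{-1/p}\le\alpha(C,p)$. The only difference is that you unpack the cited decomposition into a min-decomposition plus a $2$-Lipschitz contraction step, whereas the paper invokes Theorem~3 of \cite{Lauer18} as a black box; your check that it applies to heterogeneous radii is exactly the point the paper relies on implicitly.
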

\begin{proof}
By the permutation-invariance of $\ell$ in~\eqref{eq:lossswitched}, we can apply Lemma~\ref{lem:general} and the result follows from the computation of the (empirical) Rademacher complexity of $\L_{\tilde{\F}}$. Then, Lemma~\ref{lem:embedding} gives $\tilde{\F}\subseteq \Pi_p$ and thus $\L_{\tilde{\F}}\subseteq \L_{\Pi_p}$, which further yields
$$
	\hat{\mathcal{R}}_n(\L_{\tilde{\F}})\leq \hat{\mathcal{R}}_n(\L_{\Pi_p}).
$$
Since $\Pi_p$ is a product of independent component classes, the decomposition result in Theorem~3 of \cite{Lauer18} then gives
$$
	\hat{\mathcal{R}}_n(\L_{\tilde{\F}})
	\leq 2\sum_{k=1}^C \hat{\mathcal{R}}_n\left(\left\{ f_k\in \H : \|f_{k}\| \leq k^{-\frac{1}{p}} \Lambda \right\}\right) , 
$$
while standard computations for RKHS balls (see, e.g., \cite{Bartlett02}) further ensure that 
$$
	\hat{\mathcal{R}}_n(\L_{\tilde{\F}}) \leq \frac{2\Lambda \sqrt{\sum_{i=1}^n K(X_i,X_i)}}{n}\sum_{k=1}^C  k^{-\frac{1}{p}}  .
$$
Thus, the theorem is proved after a straightforward check that $\sum_{k=1}^C  k^{-\frac{1}{p}} \leq \alpha(C,p)$ holds for all $C\geq 2$ and $p\in (0,\infty]$ (see Appendix~\ref{app:boundsalpha} for details). 
\end{proof}

For independent component classes ($p=\infty$), this result coincides with that in Eq.~(18) of \cite{Lauer18}. However, for $p<\infty$, the dependence on $C$ improves according to the definition of $\alpha(C,p)$ in~\eqref{eq:alpha}. In particular, a radical dependence is obtained for $p=2$, which could only be obtained in \cite{Lauer18} through covering numbers and a loss in the order of $\log^{3/2} n$ in terms of convergence rate. In addition, the dependence on $C$ further improves for smaller values of $p$.

\section{VECTOR QUANTIZATION/CLUSTERING}
\label{sec:clustering}

Let $\X$ be a Hilbert space and  $\|\cdot\|$ denote its norm. 
The aim of vector quantization, as described by \cite{Bartlett98b}, is to learn a subset $\{f_k\}_{k=1}^C\subset \X$ of $C$ elements from $\X$, called codepoints, that can well represent the observations of the random variable $X\in\X$. 
Specifically, we can limit the analysis to nearest neighbors quantizers, for which the error of a model $f=(f_k)_{1\leq k\leq C}$ is measured via the loss 
\begin{equation}\label{eq:lossclustering}
	\ell(f,x) = \min_{k\in[C]} \|x - f_k \|^2 .
\end{equation}
Then, the quantity~\eqref{eq:risk} (with $Z=X$) is known as the {\em distortion} of $f$ for which upper bounds are of primary importance. 

This problem can also be seen as a center-based clustering one, in which the goal is to divide the observations of $X$ into $C$ groups centered at the $f_k$'s by minimizing the empirical risk~\eqref{eq:empiricalrisk} based on~\eqref{eq:lossclustering}. By considering the Vorono\"i partition of $\X$ associated to these centers, \cite{Biau08} interpret the quantity~\eqref{eq:risk} as the {\em clustering risk} measuring the performance of a particular model $f\in\X^C$.

The setting just described enters our framework in a straightforward manner with $\V=\Z=\X$ and $\omega=\|\cdot\|$. We can thus easily obtain efficient bounds on the clustering risk for regularized classes on the basis of the results of \cite{Biau08}.

\begin{theorem}\label{thm:clustering}
Let $X\in\X$ be such that $P(\|X\|\leq \Lambda_x)=1$, $\F= \{f\in\X^C : \left\|\begin{bmatrix}\|f_1\|& \dots & \|f_C\|\end{bmatrix}\right\|_p \leq \Lambda\}$ and $\alpha(C,p)$ be as in~\eqref{eq:alpha}. Then, with probability at least $1-\delta$ on the random draw of the training sample $\left(X_i\right)_{1\leq i\leq n}$, the clustering risk based on the loss~\eqref{eq:lossclustering} is uniformly bounded for all $f\in\F$ by
\begin{align*}
	L(f) \leq  \hat{L}_n(f) + 2\alpha(C,p)\left(\frac{2\Lambda\sqrt{\sum_{i=1}^n \|X_i\|^2}}{n} + \frac{\Lambda^2}{\sqrt{n}}\right)  + 3(\Lambda_x^2+\Lambda^2)\sqrt{\frac{\log\frac{2}{\delta}}{2n}} .
\end{align*}
\end{theorem}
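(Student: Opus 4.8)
The plan is to run the same three-step scheme as in the proof of Theorem~\ref{thm:switched}, with the switching loss replaced by the clustering loss \eqref{eq:lossclustering} and the decomposition result of \cite{Lauer18} replaced by the one available for the nearest-neighbour distortion in \cite{Biau08}. First I would note that $\ell(f,x)=\min_{k\in[C]}\|x-f_k\|^2$ is permutation-invariant in the sense of Definition~\ref{def:permutationinvariant}, since permuting the $f_k$'s leaves the finite set $\{\|x-f_k\|^2:k\in[C]\}$ unchanged. Lemma~\ref{lem:general} (in its second, empirical form) then gives, with probability at least $1-\delta$ and uniformly over $f\in\F$,
$$
L(f)\le\hat{L}_n(f)+2\hat{\mathcal{R}}_n(\L_{\tilde{\F}})+3M\sqrt{\frac{\log\frac{2}{\delta}}{2n}},
$$
where $M$ is any almost-sure bound on $\ell$ over $\tilde{\F}$; using $\|X\|\le\Lambda_x$ almost surely and $\|f_k\|\le\Lambda$ for every $f\in\F$, a short estimate of $\|x-f_k\|^2$ evaluated at the component of smallest norm gives the envelope $M=\Lambda_x^2+\Lambda^2$ appearing in the statement. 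Everything then reduces to bounding $\hat{\mathcal{R}}_n(\L_{\tilde{\F}})$.

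For that, I would instantiate the general machinery with $\V=\Z=\X$ and $\omega=\|\cdot\|$: Lemma~\ref{lem:embedding} yields $\tilde{\F}\subseteq\Pi_p=\prod_{k=1}^C\{f_k\in\X:\|f_k\|\le k^{-1/p}\Lambda\}$, hence $\L_{\tilde{\F}}\subseteq\L_{\Pi_p}$ and $\hat{\mathcal{R}}_n(\L_{\tilde{\F}})\le\hat{\mathcal{R}}_n(\L_{\Pi_p})$. Since $\Pi_p$ is a product of independent codepoint balls, I would invoke the decomposition of the distortion over the components from \cite{Biau08}: writing $\min_k\|x-f_k\|^2=\|x\|^2+\min_k(\|f_k\|^2-2\inner{x}{f_k})$, the $f$-independent term $\|x\|^2$ drops out of the Rademacher complexity, and bounding the minimum over the $C$ independent balls reduces $\hat{\mathcal{R}}_n(\L_{\Pi_p})$ to a sum over $k$ in which the $k$-th term is controlled by the $k$-th ball of radius $k^{-1/p}\Lambda$. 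Concretely, the linear part $x\mapsto-2\inner{x}{f_k}$ contributes $\tfrac{2k^{-1/p}\Lambda}{n}\sqrt{\sum_{i=1}^n\|x_i\|^2}$ through the standard Hilbert-ball estimate $\tfrac1n\E\|\sum_i\sigma_ix_i\|\le\tfrac1n\sqrt{\sum_i\|x_i\|^2}$, while the squared-norm terms $\|f_k\|^2$, coupled by the minimum over $k$, contribute a term of order $\tfrac{k^{-2/p}\Lambda^2}{\sqrt n}$.

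Finally I would assemble the pieces: since $k^{-2/p}\le k^{-1/p}$ for $k\ge1$ and $\sum_{k=1}^C k^{-1/p}\le\alpha(C,p)$ (the elementary fact proved in Appendix~\ref{app:boundsalpha} and already used for Theorem~\ref{thm:switched}), both sums collapse and
$$
\hat{\mathcal{R}}_n(\L_{\tilde{\F}})\le\alpha(C,p)\left(\frac{2\Lambda\sqrt{\sum_{i=1}^n\|x_i\|^2}}{n}+\frac{\Lambda^2}{\sqrt n}\right);
$$
plugging this and $M=\Lambda_x^2+\Lambda^2$ into the displayed consequence of Lemma~\ref{lem:general} reproduces the claimed bound.

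The step I expect to be the main obstacle is obtaining the right per-component decomposition from \cite{Biau08}: unlike the switching loss, the distortion carries the component-wise ``constants'' $\|f_k\|^2$, which are entangled through the minimum over $k$, so it is precisely the handling of this minimum that must not re-introduce a factor linear in $C$ and that is responsible for the extra additive $\Lambda^2/\sqrt n$ term. Verifying that $M=\Lambda_x^2+\Lambda^2$ is a legitimate envelope for $\ell$ on $\tilde{\F}$ is a secondary, routine check.
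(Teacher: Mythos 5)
Your proposal follows exactly the route of the paper's proof: permutation-invariance of \eqref{eq:lossclustering} plus Lemma~\ref{lem:general} with $M=\Lambda_x^2+\Lambda^2$, the embedding $\tilde{\F}\subseteq\Pi_p$ from Lemma~\ref{lem:embedding}, the decomposition of the distortion from \cite{Biau08} restated with empirical Rademacher complexities and per-component radii $k^{-1/p}\Lambda$ (splitting off the $f$-independent $\|x\|^2$, turning the min over $k$ into a sum over $k$ of single-ball complexities, with the linear part contributing $2k^{-1/p}\Lambda\sqrt{\sum_i\|x_i\|^2}/n$ and the offset $\|f_k\|^2$ contributing $k^{-2/p}\Lambda^2/\sqrt{n}$), and finally $k^{-2/p}\leq k^{-1/p}$ together with $\sum_{k=1}^C k^{-1/p}\leq\alpha(C,p)$, which is precisely Appendix~\ref{app:clusteringproof}. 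The only place where you say more than the paper is the justification of the envelope $M=\Lambda_x^2+\Lambda^2$, and there your one-line argument (evaluate at the component of smallest norm) does not by itself yield that constant because of the cross term $-2\inner{x}{f_k}$; but the paper asserts the same $M$ without further justification, so this is not a divergence from its argument.
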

\begin{proof}
It is easy to see that the clustering loss~\eqref{eq:lossclustering} is permutation-invariant in the sense of Definition~\ref{def:permutationinvariant} and uniformly bounded by $M=\Lambda_x^2+\Lambda^2$. Thus, as for the switching regression case, we can apply Lemmas~\ref{lem:general} and~\ref{lem:embedding}. Then, it remains only to show that $\hat{\mathcal{R}}_n(\L_{\Pi_p})$ is smaller than $\sum_{k=1}^C  k^{-\frac{1}{p}}$ times  a term independent of $k$, $C$ and $p$, in order to conclude with the use of $\sum_{k=1}^C  k^{-\frac{1}{p}} \leq \alpha(C,p)$ (see Appendix~\ref{app:boundsalpha}). 

This can be done by following the proof of Theorem~2.1 in \cite{Biau08}, which includes both a decomposition result and the computation of the Rademacher complexity of the loss class for Hilbert space balls. In fact, the statements in \cite{Biau08} do not concern the empirical version of Rademacher complexity and focus on products of similar classes so that the result is $C$ times the Rademacher complexity wrt. a single component. However, \cite{Biau08} give all the ingredients to obtain the result in the form stated here. For completeness, we give the details in Appendix~\ref{app:clusteringproof}, which lead to
\begin{align}\label{eq:clusteringproofbiau}
	\hat{\mathcal{R}}_n(\L_{\Pi_p}) & \leq \sum_{k=1}^C \left(\frac{2k^{-\frac{1}{p}}\Lambda\sqrt{\sum_{i=1}^n \|X_i\|^2}}{n} + \frac{k^{-\frac{2}{p}}\Lambda^2}{\sqrt{n}}\right) 
	\\ &\leq  \left(\frac{2\Lambda\sqrt{\sum_{i=1}^n \|X_i\|^2}}{n} + \frac{\Lambda^2}{\sqrt{n}}\right) \sum_{k=1}^C k^{-\frac{1}{p}}.\label{eq:clusteringproofbiau2}
\end{align}
\end{proof}

As for switching regression, this result encompasses for $p=\infty$ the case of independent component classes found in \cite{Biau08}. For $p<\infty$, the improved bound could also have been obtained by following the approach of \cite{Lei15} or \cite{Maurer16}, which is also very efficient for regularized classes constrained by $\sum_{k=1}^C \|f_k\|^p \leq \Lambda^p$. However, as highlighted in the introduction, this would have required $p\geq 1$ and a much heavier machinery, whereas our approach remains simple and provides a proof of Theorem~\ref{thm:clustering} also valid for nonconvex regularizers with $p\in(0,1)$ as an almost direct consequence of previous decomposition results.

\section{SUBSPACE CLUSTERING} 
\label{sec:subspace}

Subspace clustering differs from center-based clustering in that the components $f_k$ are subspaces of $\X$ instead of points. In the following, we drop the notation $f_k$ and instead focus on the subspace basis in the form of matrices $\g B_k\in\R^{d\times d_k}$. 

Our starting point in Sect.~\ref{sec:subspace1} is a uniform bound on the error when learning a single subspace. Then, we extend this to multiple subspaces in Sect.~\ref{sec:subspaceclustering} and finally tighten the bound for classes defined by $\ell_p$-norm regularization in Sect.~\ref{sec:tightsubspace}.

\subsection{Uniform Error Bounds for Subspace Estimation}
\label{sec:subspace1}

A $d_1$-dimensional subspace of $\R^d$ can be represented by a basis $\{b_1, \dots, b_{d_1}\}\subset\R^d$, i.e., by a matrix $\g B\in\R^{d\times d_1}$ with $\g B^\top\g B=\g I$, which yields the projection matrix $\g P=\g B\g B^\top$. Then, the approximation error incurred by the projection of a point $x$ onto the subspace is measured by the loss
\begin{equation}\label{eq:losssubspace}
	\ell (\g B, x) = \|\g P x - x\|^2 = \|\g B\g B^\top  x - x\|^2.
\end{equation}
We are interested here in bounding the expected approximation error (or risk), $L(\g B) = \E \ell(\g B,X)$,  
in terms of its empirical estimation, $\hat{L}_n(\g B)= \frac{1}{n}\sum_{i=1}^n\ell (\g B, X_i)$,  
for any distribution of $X$ over $\X = \{x \in\R^d : \|x\|\leq \Lambda_x\}$  and {\em uniformly} over the class of $d_1$-dimensional subspaces of $\R^d$ with basis in
\begin{equation}\label{eq:basisset}
	\mathcal{B} = \left\{ \g B\in \R^{d\times d_1}  : \g B^\top \g B = \g I \right\}.
\end{equation}

This can be done as follows (see Appendix~\ref{app:proofthmclustering} for the proof).

\begin{theorem}\label{thm:subspace1}
Let $X\in\R^d$ be a random vector such that $P(\|X\|\leq \Lambda_x)=1$. Then, with probability at least $1-\delta$ on the random draw of a data matrix $\g X=[X_1,\dots, X_n]\in \R^{d\times n}$ made of $n$ independent copies of $X$, for any subspace of dimension $d_1$ and any basis $\g B$ of that subspace, 
$$
	L(\g B) \leq \hat{L}_n(\g B) + 2\frac{\sqrt{d_1}\|\g X\|_F}{n} + 3\Lambda_x^2 \sqrt{\frac{\log \frac{2}{\delta}}{2n}} .
$$
\end{theorem}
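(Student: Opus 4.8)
The plan is the textbook route for a uniform risk bound: bound the range of the loss, apply a standard Rademacher deviation inequality holding uniformly over $\B$, and then estimate the empirical Rademacher complexity of the induced loss class $\L_{\B}$. Since a single subspace is being learned here, the permutation‑invariance machinery of Lemmas~\ref{lem:general}--\ref{lem:embedding} plays no role; the work is carried by a linearization of the loss in the projection matrix. To start, observe that $\g I-\g B\g B^\top$ is the orthogonal projector onto the $(d-d_1)$‑dimensional complement of $\mathrm{col}(\g B)$, so $\ell(\g B,x)=\|(\g I-\g B\g B^\top)x\|^2$ satisfies $0\le\ell(\g B,x)\le\|x\|^2\le\Lambda_x^2$ for $\|x\|\le\Lambda_x$, i.e.\ $\L_{\B}\subseteq[0,\Lambda_x^2]^{\X}$. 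Theorem~3.1 of \cite{Mohri12} in its empirical‑Rademacher form, applied uniformly over $\B$, then gives with probability at least $1-\delta$
$$
 L(\g B)\le\hat L_n(\g B)+2\hat{\mathcal R}_n(\L_{\B})+3\Lambda_x^2\sqrt{\frac{\log\frac{2}{\delta}}{2n}},
$$
so everything reduces to bounding $\hat{\mathcal R}_n(\L_{\B})$.

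For the Rademacher complexity, linearize: using $\g B^\top\g B=\g I$, expand $\ell(\g B,x)=\|x\|^2-x^\top\g B\g B^\top x=\|x\|^2-\langle\g B\g B^\top,\,xx^\top\rangle$, with $\langle\cdot,\cdot\rangle$ the Frobenius inner product. Inside the supremum over $\g B$ the term $\|x_i\|^2$ is a fixed offset, contributing nothing to the Rademacher average, and, using in addition that $-\sigma_i$ and $\sigma_i$ have the same law,
$$
 \hat{\mathcal R}_n(\L_{\B})=\frac1n\,\E_{\boldsymbol\sigma}\sup_{\g B\in\B}\big\langle\g B\g B^\top,\ \g M\big\rangle,\qquad \g M:=\sum_{i=1}^n\sigma_i\,x_ix_i^\top .
$$
Now $\g B\g B^\top$ is an orthogonal projector of rank $d_1$, so $\|\g B\g B^\top\|_F=\sqrt{d_1}$ for every $\g B\in\B$, and Cauchy--Schwarz for the Frobenius inner product gives $\sup_{\g B\in\B}\langle\g B\g B^\top,\g M\rangle\le\sqrt{d_1}\,\|\g M\|_F$. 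Hence, by Jensen,
$$
 \hat{\mathcal R}_n(\L_{\B})\le\frac{\sqrt{d_1}}{n}\,\E_{\boldsymbol\sigma}\|\g M\|_F\le\frac{\sqrt{d_1}}{n}\Big(\E_{\boldsymbol\sigma}\|\g M\|_F^2\Big)^{1/2},
$$
and a short second‑moment computation ($\E[\sigma_i\sigma_l]=\delta_{il}$ and $\|x_ix_i^\top\|_F=\|x_i\|^2$) gives $\E_{\boldsymbol\sigma}\|\g M\|_F^2=\sum_{i=1}^n\|x_i\|^4$, which the bound $\|x_i\|\le\Lambda_x$ turns into the announced data‑dependent term; substituting into the first display finishes the proof.

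The only step that is more than bookkeeping is the middle one: a naive bound $\langle\g B\g B^\top,\g M\rangle\le d_1\,\lambda_{\max}(\g M)$ would cost a factor $d_1$ instead of $\sqrt{d_1}$, and it is the pairing $\|\g B\g B^\top\|_F=\sqrt{d_1}$ against $\|\g M\|_F$, together with $\E_{\boldsymbol\sigma}\|\g M\|_F^2=\sum_i\|x_i\|^4$ (rather than a spectral‑norm estimate), that yields the $\sqrt{d_1}$ rate — precisely the ``per‑component'' complexity that will make the later extension to $C$ subspaces and its $\ell_p$‑regularized refinement worthwhile. The one point I would double‑check is whether that second moment step genuinely produces $\|\g X\|_F$ or in fact carries an extra factor $\Lambda_x$ (so that a normalization $\|x\|\le1$ is tacitly in force); everything else — the range of the loss, the standard deviation inequality, the sign symmetry of the $\sigma_i$, and Jensen — is routine.
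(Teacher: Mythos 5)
Your proof follows the paper's own argument essentially step for step: linearize the loss to $\|x\|^2-\langle\g B\g B^\top, xx^\top\rangle$, apply Theorem~3.1 of \cite{Mohri12}, use Frobenius Cauchy--Schwarz with $\|\g B\g B^\top\|_F=\sqrt{d_1}$ against $\g M=\sum_i\sigma_i x_ix_i^\top$, and finish with Jensen and the second moment $\E_{\boldsymbol\sigma}\|\g M\|_F^2=\sum_i\|x_i\|^4$. The caveat you flag at the end is real and applies equally to the paper's own proof: the last step $\sqrt{\sum_i\|x_i\|^4}\leq\sqrt{\sum_i\|x_i\|^2}=\|\g X\|_F$ tacitly assumes $\|x_i\|\leq 1$ (i.e.\ $\Lambda_x\leq 1$), and otherwise the complexity term should carry an extra factor $\Lambda_x$, reading $2\sqrt{d_1}\,\Lambda_x\|\g X\|_F/n$.
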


Note that this bound is uniform and is of the same order as the non-uniform one obtained by \cite{ShaweTaylor05}.

\subsection{Multiple Subspace Learning/Subspace Clustering}
\label{sec:subspaceclustering}

We now consider the problem of learning multiple subspaces, represented by basis $\g B_k\in\R^{d\times d_k}$ and projection matrices $\g P_k$, $k=1,\dots,C$, to obtain an approximation of the distribution of $X$. This setting extends the vector quantization framework to models with subspace components and can be formally encoded by the loss 
\begin{equation} \label{eq:subspaceclusteringloss}
	\ell( (\g B_k)_{1\leq k\leq C}, x ) = \min_{k\in[C]} \|\g B_k\g B_k^\top x - x\|^2 . 
\end{equation}
In this context, the {\em subspace clustering risk}, $L(\g B)=\E \ell( \g B, X )$, of a collection $\g B=(\g B_k)_{1\leq k\leq C}$ of subspace basis $\g B_k$ can be bounded in terms of the sum of the square roots of the subspace dimensions as follows.
\begin{theorem} \label{thm:subspace}
Let $X\in\R^d$ be a random vector such that $P(\|X\|\leq \Lambda_x)=1$. Then, with probability at least $1-\delta$ on the random draw of a data matrix $\g X=[X_1,\dots,X_n]\in \R^{d\times n}$ made of $n$ independent copies of $X$, for any collection of basis $\g B$ of subspaces with fixed dimensions $d_k$,
$$
	L(\g B) \leq  \hat{L}_n(\g B) + 2\frac{\sum_{k=1}^C \sqrt{d_k} \|\g X\|_F}{n} + 3\Lambda_x^2 \sqrt{\frac{\log \frac{2}{\delta}}{2n}} .
$$
\end{theorem}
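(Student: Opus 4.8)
The plan is to reduce Theorem~\ref{thm:subspace} to Theorem~\ref{thm:subspace1} via the general machinery of Section~\ref{sec:general}, exactly as was done for switching regression and clustering. First I would check that the loss~\eqref{eq:subspaceclusteringloss} is permutation-invariant in the sense of Definition~\ref{def:permutationinvariant} --- this is immediate, since a $\min$ over $k\in[C]$ is unaffected by reordering the arguments --- and that it is uniformly bounded, here by $M=\Lambda_x^2$, because $\|\g B_k\g B_k^\top x - x\|^2 \leq \|x\|^2 \leq \Lambda_x^2$ (the projection residual is never larger than the vector itself). With these two facts, Lemma~\ref{lem:general} applies and it suffices to bound the empirical Rademacher complexity $\hat{\mathcal{R}}_n(\L_{\F})$ of the loss class, where $\F = \prod_{k=1}^C \B_k$ with $\B_k = \{\g B_k\in\R^{d\times d_k} : \g B_k^\top\g B_k = \g I\}$ (here the ordered class $\tilde{\F}$ coincides with $\F$ since $\F$ is a product of independent component classes and no $\ell_p$-constraint is imposed in this theorem).

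The core of the argument is a decomposition step. I would write $\min_{k\in[C]} a_k = -\max_{k\in[C]}(-a_k)$ with $a_k = \|\g B_k\g B_k^\top x_i - x_i\|^2$, pull out the expectation over the Rademacher variables, and apply the standard bound on the Rademacher complexity of a maximum of $C$ function classes. The usual tool here (as used in \cite{Lauer18,Biau08} for the analogous $\min$-losses) gives $\hat{\mathcal{R}}_n\big(\{\min_k \ell(\g B_k,\cdot)\}\big) \leq \sum_{k=1}^C \hat{\mathcal{R}}_n\big(\{\ell(\g B_k,\cdot) : \g B_k\in\B_k\}\big)$, i.e., the Rademacher complexity of the subspace-clustering loss class is at most the sum over $k$ of the Rademacher complexities of the single-subspace loss classes. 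Then each single-subspace term is controlled exactly by the computation already carried out in the proof of Theorem~\ref{thm:subspace1} (see Appendix~\ref{app:proofthmclustering}): the Rademacher complexity of $\{x\mapsto\|\g B\g B^\top x - x\|^2 : \g B^\top\g B = \g I,\ \g B\in\R^{d\times d_k}\}$ is bounded by $\sqrt{d_k}\,\|\g X\|_F/n$. Summing over $k$ yields $\hat{\mathcal{R}}_n(\L_{\F}) \leq \sum_{k=1}^C \sqrt{d_k}\,\|\g X\|_F/n$, and plugging this, together with $M=\Lambda_x^2$, into the second inequality of Lemma~\ref{lem:general} gives precisely the claimed bound.

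The main obstacle is the decomposition step, specifically making sure the $\min$-over-components is handled correctly. The subtlety is that the loss $\ell(\g B_k,x)$ takes values in $[0,M]$ rather than being centered, so the naive contraction/comparison of a $\max$ of $C$ classes needs the right form; one clean route is to use the identity $\min_k a_k = \frac{1}{2}\big(\sum_k \text{(something)}\big)$-type telescoping, or more simply to invoke the same lemma used in \cite{Lauer18} (its Theorem~3, already cited in the proof of Theorem~\ref{thm:switched}) which states exactly that $\hat{\mathcal{R}}_n$ of a $\min$-loss class over a product of component classes is at most twice the sum of the per-component complexities --- though here, because the single-component bound is stated directly at the level of the loss class in Theorem~\ref{thm:subspace1}, one wants the cleaner factor-free version. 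I would therefore state the decomposition lemma I rely on explicitly (or cite the precise earlier result), verify its hypotheses are met by the bounded loss~\eqref{eq:subspaceclusteringloss}, and note that the per-component loss-class complexity bound is not re-derived but taken verbatim from the proof of Theorem~\ref{thm:subspace1}. The remaining steps --- boundedness, permutation-invariance, assembling the concentration term with $M=\Lambda_x^2$ --- are routine and identical in spirit to the proofs of Theorems~\ref{thm:switched} and~\ref{thm:clustering}.
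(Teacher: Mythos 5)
Your proposal is correct and takes essentially the same route as the paper's proof: convert the $\min$ into a $\max$, decompose via Lemma~8.1 of \cite{Mohri12} into a sum of per-subspace Rademacher complexities, reuse the single-subspace computation from the proof of Theorem~\ref{thm:subspace1} to get $\sqrt{d_k}\,\|\g X\|_F/n$ for each term, and conclude with the bounded-loss concentration bound with $M=\Lambda_x^2$. The only cosmetic difference is that the paper first strips off the $\|x\|^2$ part of the loss (whose Rademacher average vanishes) and applies the max-decomposition to the classes $\left\{x\mapsto\|\g P_k x\|^2\right\}$, whereas you apply the decomposition directly to the full per-component loss classes; both are valid because the sign-symmetry of the Rademacher variables makes $\hat{\mathcal{R}}_n$ invariant under negation of the class.
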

\begin{proof}
Define the loss class $\L_{\B}$ as in Definition~\ref{def:lossclass} from 
$$
	\B= \prod_{k=1}^C \B_k ,\quad \mbox{with } 
	\B_k = \left\{ \g B_k\in\R^{d\times d_k} : \g B_k^\top \g B_k = \g I\right\}. 
$$
Then, its complexity can be decomposed as a sum of those of classes induced by the $\B_k$'s. To see this, note that, with $\g P_k = \g B_k\g B_k^\top$, the loss can be reformulated as 
$$
	\ell( (\g B_k)_{1\leq k\leq C}, x ) 
		= \|x\|^2 - \max_{k\in[C]} \|\g P_kx\|^2 .
$$
Thus,  given $(X_i)_{1\leq i\leq n}=(x_i)_{1\leq i\leq n}$, 
\begin{align*}
	\hat{\mathcal{R}}_n (\L_{\B}) &=\E \sup_{\g B \in\B} \frac{1}{n}\sum_{i=1}^n \sigma_i \min_{k\in[C]} \|\g P_k x_i - x_i\|^2\\
		&\leq \E \frac{1}{n}\sum_{i=1}^n \sigma_i \|x_i\|^2 + \E \sup_{\g B \in\B } \frac{1}{n}\sum_{i=1}^n - \sigma_i \max_{k\in[C]} \|\g P_kx_i\|^2\\
		&=\E \sup_{\g B\in\B } \frac{1}{n}\sum_{i=1}^n \sigma_i \max_{k\in[C]} \|\g P_k x_i\|^2\\
		&\leq \sum_{k=1}^C \E \sup_{\g B_k\in\B_k} \frac{1}{n}\sum_{i=1}^n \sigma_i \|\g P_kx_i\|^2,
\end{align*}
where the third line uses $\E \frac{1}{n}\sum_{i=1}^n \sigma_i \|x_i\|^2=  \frac{1}{n}\sum_{i=1}^n \|x_i\|^2 \E\sigma_i=0$ and the fact that $\sigma_i$ and $-\sigma_i$ share the same distribution, while the last line is due to Lemma~8.1 in \cite{Mohri12}. 
Then, similar computations as in the proof of Theorem~\ref{thm:subspace1} (see Appendix~\ref{app:proofthmclustering}) give, for any $k\in[C]$, 
$$
	 \E \sup_{\g B_k\in\B_k} \frac{1}{n}\sum_{i=1}^n \sigma_i \|\g P_kx_i\|^2 \leq \frac{\sqrt{d_k} \|\g X\|_F}{n}
$$
and the result follows from the application of Theorem~3.1 in \cite{Mohri12} and the fact that the loss defined as the pointwise minimum of losses bounded by $\Lambda_x^2$ is also bounded by $\Lambda_x^2$. 
\end{proof}

Theorem~\ref{thm:subspace} applies to products of independent component classes, which here means that the dimensions of the subspaces do not depend one on the other, and yields a linear dependence on $C$. The next result below yields tighter bounds by precisely taking dependencies between the dimensions into account. 

\subsection{Tighter Bounds with  $\ell_p$-norm Regularization} 
\label{sec:tightsubspace}

We have now all the basic building blocks necessary to apply the approach of Sect.~\ref{sec:general} and produce tighter bounds for subspace clustering. 
Specifically, we set $\omega(f_k) = \sqrt{d_k}$ and focus on the set of basis collections with  $\ell_p$-norm regularization:
\begin{align*}
	\mathcal{B}_p = \left\{  \g B=(\g B_k)_{1\leq k\leq C}   : \g B_k\in \R^{d\times d_k},\g B_k^\top \g B_k = \g I, 
	  \left\|\begin{bmatrix}\sqrt{d_1} & \dots & \sqrt{d_C}\end{bmatrix}\right\|_p \leq \Lambda \right\}.
\end{align*}

\begin{theorem}
Let $X\in\R^d$ be a random vector such that $P(\|X\|\leq \Lambda_x)=1$ and $\alpha(C,p)$ be as in~\eqref{eq:alpha}. Then, with probability at least $1-\delta$ on the random draw of a data matrix $\g X=[X_1,\dots,X_n]\subset \R^{d\times n}$ made of $n$ independent copies of $X$, for any collection of subspace basis $\g B\in\B_p$,
$$
	L(\g B)\leq \hat{L}_n(\g B) + 2 \alpha(C,p)\frac{\Lambda\|\g X\|_F}{n} + 3\Lambda_x^2 \sqrt{\frac{\log \frac{2}{\delta}}{2n}} .
$$
\end{theorem}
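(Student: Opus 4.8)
The plan is to run the general recipe of Section~\ref{sec:general} with $\V$ the set of real matrices having orthonormal columns (of arbitrary width), per‑component complexity $\omega(\g B_k)=\sqrt{d_k}$ where $d_k$ is the number of columns of $\g B_k$, and the complexity of a collection measured by $\|\Omega(\g B)\|_p=(\sum_{k=1}^C d_k^{p/2})^{1/p}$, so that $\B_p$ is exactly a class of the form~\eqref{eq:FOmega}. First I would check the two hypotheses of Lemma~\ref{lem:general}: the loss~\eqref{eq:subspaceclusteringloss} is permutation-invariant in the sense of Definition~\ref{def:permutationinvariant} (being a pointwise minimum over the components), and it is bounded, $\ell(\g B,x)=\min_{k\in[C]}\|\g P_k x-x\|^2\le\|x\|^2\le\Lambda_x^2$, so Lemma~\ref{lem:general} applies with $M=\Lambda_x^2$. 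It then suffices to bound the empirical Rademacher complexity $\hat{\mathcal R}_n(\L_{\tilde\B_p})$ of the loss class induced by the ordered class.

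Next I would apply Lemma~\ref{lem:embedding} with this $\omega$, $p$ and $\Lambda$, which gives
$$
\tilde\B_p\subseteq\Pi_p=\prod_{k=1}^C\left\{\g B_k\in\R^{d\times d_k}:\g B_k^\top\g B_k=\g I,\ \sqrt{d_k}\le k^{-1/p}\Lambda\right\},
$$
hence $\L_{\tilde\B_p}\subseteq\L_{\Pi_p}$ and $\hat{\mathcal R}_n(\L_{\tilde\B_p})\le\hat{\mathcal R}_n(\L_{\Pi_p})$. Since $\Pi_p$ is a product of independent component classes, the next step is to reuse verbatim the decomposition already carried out in the proof of Theorem~\ref{thm:subspace}: rewrite the loss as $\|x\|^2-\max_{k\in[C]}\|\g P_k x\|^2$, discard the zero-mean, $\g B$-independent term $\frac1n\sum_i\sigma_i\|x_i\|^2$, and bound the contribution of the $\max$ by Lemma~8.1 in \cite{Mohri12}, which yields
$$
\hat{\mathcal R}_n(\L_{\Pi_p})\le\sum_{k=1}^C\E\sup_{\substack{\g B_k^\top\g B_k=\g I\\ \sqrt{d_k}\le k^{-1/p}\Lambda}}\frac1n\sum_{i=1}^n\sigma_i\|\g P_k x_i\|^2 .
$$

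To close the argument I would bound each summand using the single-subspace estimate established in the proof of Theorem~\ref{thm:subspace} (itself adapted from the proof of Theorem~\ref{thm:subspace1} in Appendix~\ref{app:proofthmclustering}), namely $\E\sup_{\g B_k^\top\g B_k=\g I}\frac1n\sum_i\sigma_i\|\g P_k x_i\|^2\le\sqrt{d_k}\,\|\g X\|_F/n$. Since that estimate depends on the $k$-th component class only through $\|\g P_k\|_F=\sqrt{\rank\g P_k}=\sqrt{d_k}$, it holds uniformly over all admissible widths and gives the bound $k^{-1/p}\Lambda\|\g X\|_F/n$ under the constraint $\sqrt{d_k}\le k^{-1/p}\Lambda$. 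Summing over $k$ and invoking $\sum_{k=1}^C k^{-1/p}\le\alpha(C,p)$ (Appendix~\ref{app:boundsalpha}) gives $\hat{\mathcal R}_n(\L_{\tilde\B_p})\le\alpha(C,p)\Lambda\|\g X\|_F/n$, and plugging this together with $M=\Lambda_x^2$ into the empirical-complexity form of Lemma~\ref{lem:general} produces the stated bound.

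The single delicate point---the would-be obstacle---is that after the embedding the $k$-th component class of $\Pi_p$ is no longer a single Stiefel manifold of fixed dimension but the union of the Stiefel manifolds of all widths $d_k$ with $\sqrt{d_k}\le k^{-1/p}\Lambda$, so Theorem~\ref{thm:subspace} (which fixes the dimensions) cannot be quoted literally. This causes no real difficulty: the supremum defining the component Rademacher term ranges over projection matrices, and the only rank-dependent quantity in the corresponding computation is $\|\g P_k\|_F=\sqrt{d_k}$, which is monotone in $d_k$; taking the largest admissible value replaces $\sqrt{d_k}$ by $k^{-1/p}\Lambda$ and leaves everything else untouched. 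All remaining steps are direct transcriptions of Lemmas~\ref{lem:general} and~\ref{lem:embedding} and of the decomposition and Rademacher computation from Theorems~\ref{thm:subspace1} and~\ref{thm:subspace}.
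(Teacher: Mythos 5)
Your proposal is correct and follows essentially the same route as the paper's own proof: permutation-invariance plus Lemmas~\ref{lem:general} and~\ref{lem:embedding} to reduce to the product class $\Pi_p$, then the decomposition and per-component Rademacher estimate from the proofs of Theorems~\ref{thm:subspace1} and~\ref{thm:subspace}, and finally $\sum_{k=1}^C k^{-1/p}\le\alpha(C,p)$. Your observation that each component class of $\Pi_p$ is a union of Stiefel manifolds of varying width, handled by the monotonicity of $\|\g P_k\|_F=\sqrt{d_k}$, is a detail the paper leaves implicit, and your (correct) constant $M=\Lambda_x^2$ yields $3\Lambda_x^2\sqrt{\log(2/\delta)/(2n)}$, which indicates that the $\Lambda_x$ appearing in the stated bound is a typo for $\Lambda_x^2$.
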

\begin{proof}
First, note that the subspace clustering loss~\eqref{eq:subspaceclusteringloss} is permutation-invariant according to Def.~\ref{def:permutationinvariant}. Thus, Lemmas~\ref{lem:general} and~\ref{lem:embedding} apply and it remains only to bound $\hat{\mathcal{R}}_n(\L_{\Pi_p})$ with 
$$
	\Pi_p = \prod_{k=1}^C \left\{ \g B_k \in\R^{d\times d_k} : \g B_k^\top \g B_k = \g I, \ \sqrt{d_k} \leq k^{-\frac{1}{p}}\Lambda\right\} .
$$ 
Here, the proof of Theorem~\ref{thm:subspace} provides us with 
$$
	\hat{\mathcal{R}}_n(\L_{\Pi_p}) \leq  \frac{\sum_{k=1}^C\sqrt{d_k} \|\g X\|_F}{n} \leq \frac{\Lambda\|\g X\|_F}{n} \sum_{k=1}^C k^{-\frac{1}{p}}
$$
and plugging $\sum_{k=1}^C k^{-\frac{1}{p}}\leq \alpha(C,p)$ (see Appendix~\ref{app:boundsalpha}) completes the proof.
\end{proof}

Thus, we recover bounds for subspace clustering with similar dependencies on the main parameters ($C$ and $n$) as those obtained for switching regression and center-based clustering. Again, we emphasize that once a bound was found for products of independent component classes with a linear dependence on $C$ (Theorem~\ref{thm:subspace}), our approach easily yielded mild dependencies for classes with dependent components. 

\section{CONCLUSIONS}
\label{sec:conclusions}

The paper presented a simple approach to derive risk bounds with mild dependence on the number $C$ of components for classes with interdependent components. Only two ingredients are needed to obtain such results with the proposed approached: a permutation-invariant loss and a bound holding for products of independent component classes and providing a decomposition of their Rademacher complexity into a sum of the component complexities. 

Future work will consider the application of the proposed approach to other settings and permutation-invariant losses. The new bounds for subspace clustering could also lead to novel model selection strategies in order to tune the number of subspaces and their dimensions from the data.

\appendix

\section{USEFUL BOUNDS}
\label{app:boundsalpha}

We show here that, for any integer $C\geq 2$ and $p\in(0,\infty]$, with $\alpha(C,p)$ as defined in~\eqref{eq:alpha},   
$$
	\sum_{k=1}^C  k^{-\frac{1}{p}}\leq \alpha(C,p) .
$$
For $p=\infty$, we easily see that $\sum_{k=1}^C  k^{-\frac{1}{p}}=\sum_{k=1}^C 1 = C$.
For $p<\infty$, we can write 
$$
	\sum_{k=1}^C  k^{-\frac{1}{p}}  = 1 + \sum_{k=2}^C  k^{-\frac{1}{p}} \leq 1 + \int_{1}^C x^{-\frac{1}{p}} dx.
$$
Then, for $p=1$, we have 
$$
	\sum_{k=1}^C  k^{-\frac{1}{p}}  \leq 1 + \int_{1}^C \frac{1}{x} dx = 1 + \log C - \log 1 = 1+\log C,
$$
while for $p\neq 1$, we have 
$$
	\sum_{k=1}^C  k^{-\frac{1}{p}}	\leq  
	1 + \frac{p}{p-1}(C^{(p-1)/p} - 1) = \frac{pC^{1-1/p} - 1}{p-1}.
$$
So for $p>1$, we get
$$
	\sum_{k=1}^C  k^{-\frac{1}{p}}	< \frac{pC^{1-1/p}}{p-1} ,
$$
while for $p<1$, we obtain 
$$
	\sum_{k=1}^C  k^{-\frac{1}{p}} \leq \frac{1 - pC^{1-1/p} }{1-p}  
	\leq  \frac{1}{1-p}.
$$

\section{COMPLEMENTS FOR THE PROOF OF THEOREM \ref{thm:clustering}}
\label{app:clusteringproof}

We here restate the results embedded in the proof of Theorem~2.1 in \cite{Biau08} with empirical Rademacher complexities and a summation over the component classes, as used in the proof of Theorem~\ref{thm:clustering}. 
First, we reformulate the clustering loss as 
\begin{align*}
	\ell(f, x) &= \min_{k\in[C]}\|x - f_k(x)\|^2\\& = \|x\|^2+ \min_{k\in[C]} -2\inner{x}{f_k} + \|f_k\|^2,
\end{align*}
which, for $\Pi_p=\prod_{k=1}^C \Pi_{p,k}$ and given $(X_i)_{1\leq i\leq n}=(x_i)_{1\leq i\leq n}$, leads to
\begin{align*}
	\hat{\mathcal{R}}_n \left( \L_{\Pi_p} \right)
	 &= \E\sup_{f\in\Pi_p}\frac{1}{n} \sum_{i=1}^n \sigma_i \left(\|x_i\|^2+ \min_{k\in[C]} -2\inner{x_i}{f_k} + \|f_k\|^2 \right)\\
	&\leq  \E \frac{1}{n} \sum_{i=1}^n \sigma_i \|x_i\|^2  + \E\sup_{f\in\Pi_p}\frac{1}{n} \sum_{i=1}^n \sigma_i \min_{k\in[C]} -2\inner{x_i}{f_k} + \|f_k\|^2 \\
	&= \E\sup_{f\in\Pi_p}\frac{1}{n} \sum_{i=1}^n \sigma_i \max_{k\in[C]} 2\inner{x_i}{f_k} - \|f_k\|^2 \\
	&\leq \sum_{k=1}^C\E\sup_{f_k\in\Pi_{p,k}}\frac{1}{n} \sum_{i=1}^n \sigma_i ( 2\inner{x_i}{f_k} - \|f_k\|^2 ) ,
\end{align*}
where the last line is due to Lemma~8.1 in \cite{Mohri12}. 
Then, with $\Lambda_k=k^{-1/p}\Lambda$, for any $k\in[C]$,
\begin{align*}
	\E\sup_{f_k\in\Pi_{p,k}}\frac{1}{n} \sum_{i=1}^n \sigma_i ( 2\inner{x_i}{f_k} - \|f_k\|^2 ) 
	& \leq 	2\E\!\sup_{f_k\in\Pi_{p,k}}\!\frac{1}{n} \sum_{i=1}^n \sigma_i \inner{x_i}{f_k} + \E\!\sup_{f_k\in\Pi_{p,k}}\!\frac{1}{n} \sum_{i=1}^n \sigma_i  \|f_k\|^2 \\
		&\leq 2 \E\sup_{f_k\in\Pi_{p,k}}\frac{1}{n} \inner{\sum_{i=1}^n \sigma_i x_i}{f_k} + \frac{\Lambda_k^2}{\sqrt{n}}\\
		&\leq 2 \frac{\Lambda_k}{n} \E\left\|\sum_{i=1}^n \sigma_i x_i\right\| + \frac{\Lambda_k^2}{\sqrt{n}}
		\leq 2 \frac{\Lambda_k}{n} \sqrt{\sum_{i=1}^n \|x_i\|^2} + \frac{\Lambda_k^2}{\sqrt{n}}		.
\end{align*}
The second inequality, i.e.,~\eqref{eq:clusteringproofbiau2}, is merely due to the fact that $k^{-2/p}\leq k^{-1/p}$ for all $k\geq 1$.

\section{PROOF OF THEOREM~\ref{thm:subspace1}}
\label{app:proofthmclustering}

Since $\g P=\g B\g B^\top$ is a projection matrix, it is symmetric and idempotent: $\g P^\top \g P = \g P\g P = \g P$. Thus,
\begin{align*}
	\ell(\g B, x) &= \|\g Px - x\|^2 = x^\top \g P^\top\g Px - 2 x^\top \g Px + x^\top x \\
	&= -x^\top \g Px + \|x\|^2 = \|x\|^2 - \|\g Px\|^2.
\end{align*}
Hence, the loss is bounded with probability one as $0\leq \ell(\g B, X) \leq \|X\|^2\leq \Lambda_x^2$ and standard error bounds such as Theorem 3.1 in \cite{Mohri12} apply to the loss class based on~\eqref{eq:losssubspace} and~\eqref{eq:basisset}, 
$$
	\mathcal{L}_{\B} = \left\{ \ell \in [0,\Lambda_x^2]^{\X} : \ell(x) = \|\g B\g B^\top x - x\|^2 ,\ \g B\in\mathcal{B}\right\} .
$$
Then, the statement is a consequence of the estimation of the empirical Rademacher complexity of $\L_{\B}$ given $(X_i)_{1\leq i\leq n}=(x_i)_{1\leq i\leq n}$:
\begin{align*}
	\hat{\mathcal{R}}_n(\mathcal{L}_{\B}) &= \E \sup_{\g B\in\mathcal{B}} \frac{1}{n}\sum_{i=1}^n \sigma_i (\|x_i\|^2 -\|\g Px_i\|^2 )\\
	&\leq  \E \frac{1}{n}\sum_{i=1}^n \sigma_i \|x_i\|^2  +  \E \sup_{\g B\in\mathcal{B}} \frac{1}{n}\sum_{i=1}^n -\sigma_i \|\g Px_i\|^2 ,
\end{align*}
where $\E \frac{1}{n}\sum_{i=1}^n \sigma_i \|x_i\|^2= \frac{1}{n}\sum_{i=1}^n\|x_i\|^2 \E \sigma_i  = 0$ and $-\sigma_i$ has the same distribution has $\sigma_i$. Thus, using $ \|\g Px_i\|^2 = x_i^\top\g Px_i =  \trace(x_i^\top\g Px_i)=\trace(\g Px_ix_i^\top)$, we obtain
\begin{align*}
	\hat{\mathcal{R}}_n(\mathcal{L}_{\B}) &\leq 
	 \E \sup_{\g B\in\mathcal{B}} \frac{1}{n}\sum_{i=1}^n \sigma_i \trace(\g Px_ix_i^\top)\\
	&= \E \sup_{\g B\in\mathcal{B}} \frac{1}{n}\trace\left(\g P \left(\sum_{i=1}^n \sigma_i x_ix_i^\top\right)\right)\\
	&\leq  \E \sup_{\g B\in\mathcal{B}} \frac{1}{n}\|\g P\|_F \left\| \sum_{i=1}^n \sigma_i x_ix_i^\top \right\|_F ,
\end{align*}
where
\begin{align*}
	 \left\| \sum_{i=1}^n \sigma_i x_ix_i^\top \right\|_F^2 	
		&= \trace\left( \left(\sum_{i=1}^n \sigma_i x_ix_i^\top\right) \left(\sum_{i=1}^n \sigma_i x_ix_i^\top\right) \right)\\
		&=  \sum_{i=1}^n \sum_{j=1}^n \sigma_i\sigma_j\trace\left(  x_ix_i^\top x_jx_j^\top \right)\\
		&=  \sum_{i=1}^n \sum_{j=1}^n \sigma_i\sigma_j\trace\left(  (x_i^\top x_j)^2\right)\\
		&= \sum_{i=1}^n \sum_{j=1}^n \sigma_i\sigma_j (x_i^\top x_j)^2.
\end{align*}
In addition, since the trace of an idempotent matrix equals its rank and $\rank(\g B) = \rank(\g B\g B^\top)$, we have
\begin{align*}
	\|\g P\|_F &=   \sqrt{\trace(\g P^\top\g P)} = \sqrt{\trace(\g P)} = \sqrt{\rank(\g P)} = \sqrt{\rank(\g B)} = \sqrt{d_1}.
\end{align*}
Thus,
\begin{align*}
	\hat{\mathcal{R}}_n(\L_{\B}) &\leq   \frac{1}{n} \E\sqrt{ d_1 \sum_{i=1}^n \sum_{j=1}^n \sigma_i\sigma_j (x_i^\top x_j)^2}\\
	&\leq  \frac{1}{n} \sqrt{d_1 \sum_{i=1}^n \|x_i\|^2}  =   \frac{\sqrt{d_1} \|\g X\|_F}{n} .
\end{align*}


\end{document}